\newtheorem{lemma}{Lemma}
\newtheorem{remark}{Remark}
\title{Hierarchically Fair Federated Learning}
\date{\vspace{-5ex}}
\begin{document}
\author[1]{Jingfeng Zhang\thanks{Equal Contributions. Preprint. Work in Progress.}}
\author[1]{Cheng Li\textsuperscript{$*$}}
\author[2]{Antonio Robles-Kelly}

\author[1]{Mohan Kankanhalli}
\affil[1]{School of Computing, National University of Singpoare, Singapore}
\affil[2]{Deakin University, Australia}

\affil[1 ]{\textit {\{j-zhang, licheng, mohan\}@comp.nus.edu.sg}}	
\affil[2 ]{\textit {antonio.robles-kelly@deakin.edu.au}}

\maketitle

\begin{abstract}
When the federated learning is adopted among competitive agents with siloed datasets, agents are self-interested and participate only if they are fairly rewarded.
To encourage the application of federated learning, this paper employs a management strategy, i.e., more contributions should lead to more rewards. 
We propose a novel hierarchically fair federated learning (HFFL) framework. 
Under this framework, agents are rewarded in proportion to their pre-negotiated contribution levels.
HFFL$+$ extends this to incorporate heterogeneous models.
Theoretical analysis and empirical evaluation on several datasets confirm the efficacy of our frameworks in upholding fairness and thus facilitating federated learning in the competitive settings.
\end{abstract}

\section{Introduction}

Traditional machine learning techniques require agents (e.g., mobile devices, terminals, companies, etc.)  to upload their data to a central server.
This approach not only increases communication between agents and the central server due to the data volume but also entails privacy risks during data transfer or due to a server breach~\cite{Andress_infoSecurity_2014}. This is an important concern since data protection regulations impose constraints on sharing of sensitive data. 

\begin{figure}[tp!]
	\centering
	\includegraphics[scale=0.22]{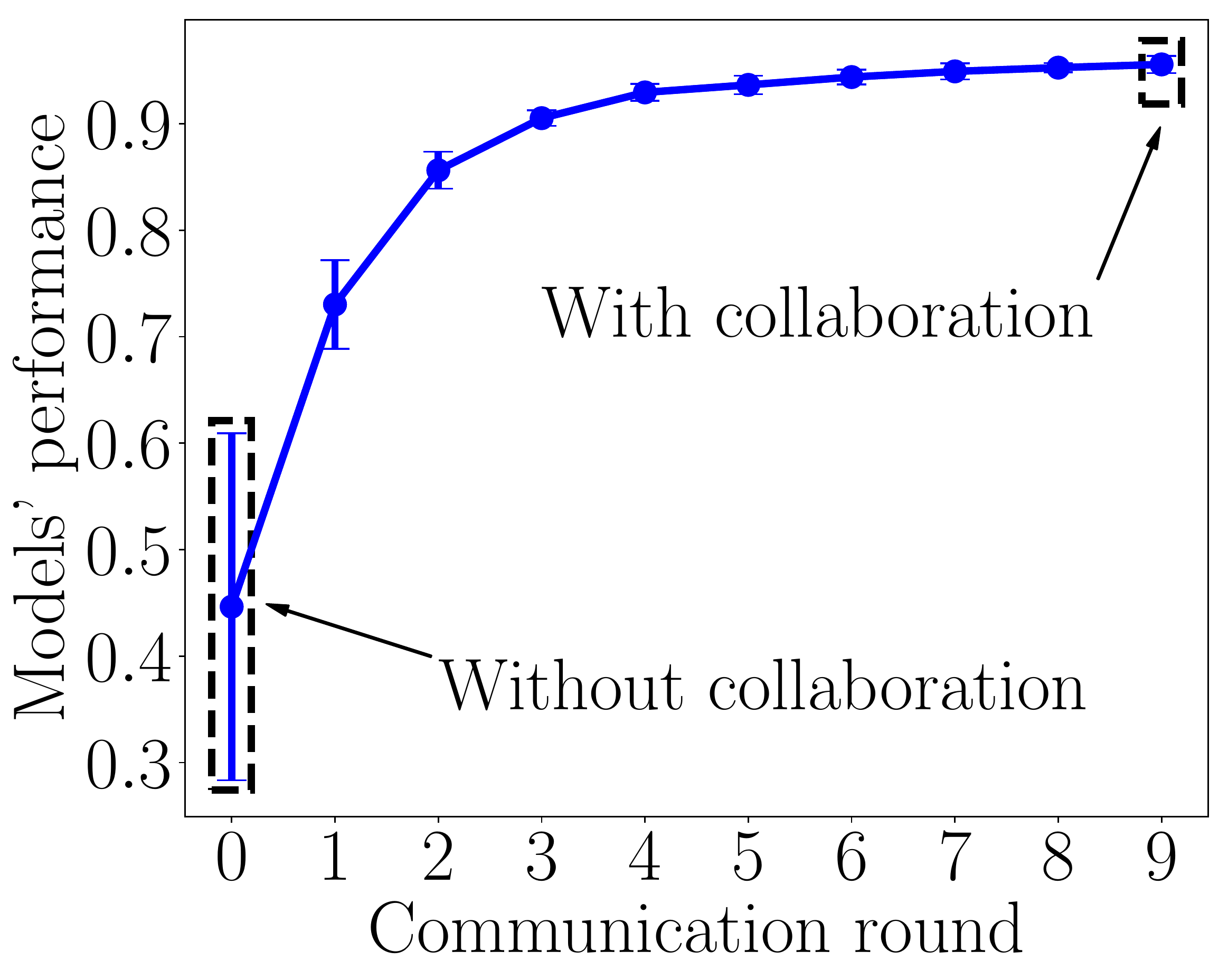}
	\includegraphics[scale=0.22]{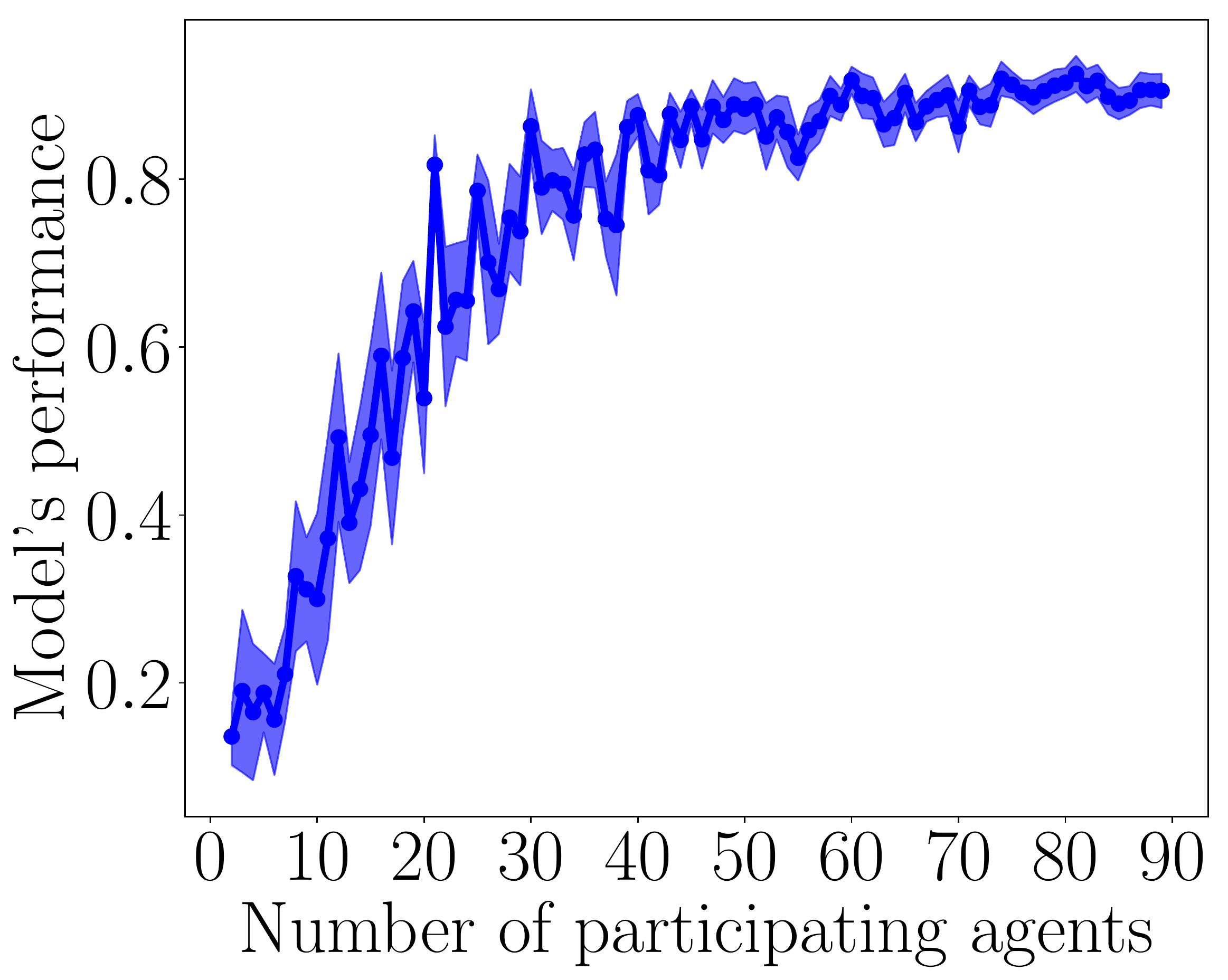}
	\vspace{-2mm}
	\caption{\textit{Left panel}: there are 30 agents (each having 1670 MNIST training data) participating in standard federated learning. At communication round 0, each agent learns a model locally. During communication rounds 1-9, they collaboratively share their data for federated learning. We report the median test accuracy and its standard deviation (error bar) over models' performance of 30 agents.
		\textit{Right panel}: there are at most 90 agents (each having 550 MNIST training data) that participate in standard federated learning. We report the median test accuracy and its standard deviation (shaded color) over participating agents models' performance. 
	}
	\label{fig:motivation}

\end{figure}

Federated learning, a recent distributed and decentralized machine learning scheme~\cite{McMahanMRHA17_aistat_fed_learning} has attracted significant attention. In federated learning, agents maintain their data locally and collaboratively learn a global machine learning model that benefits all. Specifically, each agent sends parameters (or parameters update) of local models to the central server and receives the computed parameters of the global model from the central server. In this way, all agents can jointly train a global model without exposing their own data. This scheme has desirable properties such as privacy-preservation, efficient communication, and decentralized data storage.

Federated learning could benefits all participating agents. As we can see in the left panel of Figure~\ref{fig:motivation}, at the end of communication round 0, each agent learns a model locally. Without federated learning, each agent can only learn from their local data. A model trained on local data only has poor generalization capability. Moreover, the performance of different agents' models has a large variance. This is because without federated learning, each agent has an incomplete and biased view of the global data distribution. 
At the end of every communication round 1 through 9, all agents send their model parameter updates to the central server and then receive the global model's parameters from the central server. The global model parameters are used to improve their own model locally.  With federated learning, each agent gets significant (indirect) exposure to the global data distribution and as a result, the median test accuracy among agents increases from $45\%$ to $95\%$. Moreover, each agent model's bias with respect to the global data distribution also substantially reduces. 
The right panel of Figure~\ref{fig:motivation} illustrates the importance of the agents' participation. As more agents contribute their data and federally learn a global model, the overall performance of the global model improves. To conclude, federated learning benefits agents by improving agents' models.

The tacit assumption of federated learning is that all agents are willing to participate in the federated learning~\cite{McMahanMRHA17_aistat_fed_learning,pmlr-v97-mohri19a,Li_fair_2019}. However, in real-world situations, this assumption may not necessarily hold. Agents with siloed datasets are typically under competition and self-interested so that the agents are reluctant to collaborate unless they get some reward which is fair~\cite{kairouz2019advances,Kim_Blockchained_2019,Kang_2019_incentive,Tran_federated_2019}. This necessitates the notion of fairness in federated learning.

Previous research on fairness in federated learning typically aims to optimize model performance across all agents, either maximizing the performance of the worst agent~\cite{pmlr-v97-mohri19a} or uniforming the accuracy distribution over all agents~\cite{Li_fair_2019}. However, both these approaches do not take into account the extent of an agent's contribution in the federated learning. More specifically, they tend to protect the weak agents (e.g., agents with small amounts of data) while neglecting the strong agents (e.g., agents with large amounts of data). As a result, the strong agents find it unfair on ignorance of their greater contribution. Consequently, strong agents may abort from the federated network sacrificing the small benefit or may choose to cooperate among themselves, excluding the weak ones. In order to obtain the full benefits of federated learning (see Figure \ref{fig:motivation}), we need a fairness mechanism that is acceptable to all agents, weak or strong. 

In this paper, we employ the management of reward schemes~\cite{bratton2017human,herzberg1968one} in federated learning in the sense that the agent who \textbf{contributes more} to the federated learning should be \textbf{rewarded more}. This fairness notion is applicable in many real-world situations. An illustrative example is that a company with a large amount of data would like not to cooperate with one that has a small amount of data if both receive the same reward (e.g., learn the same machine learning model). 
It usually exerts a lot of effort and resources to collect data, which is valuable. Thus, equal reward in this case is unfair. In order to facilitate federated learning among two companies, they should be rewarded differently based on their relative contribution of data. 
This fairness notion based on proportionality also finds support from social psychology~\cite{Tornblom_1985_fairness}, which advocates that  an individual's outcomes (rewards) should match (be proportional to) his/her inputs (contributions). Similar ideas are also explored in game theory~\cite{Rabin_1993_fairness} and bandwidth allocation~\cite{Li_bandwidth_2008}.

To apply this fairness notion to federated learning, two issues have to be addressed, i.e., \textbf{(a)} how to determine the extent of an agent's contribution to federated learning and \textbf{(b)} what is the proportionate reward that the agents should receive in order to achieve the fairness. 

For the question \textbf{(a)}, data Shapley can perhaps be used to determine the extent of an agent's contribution since it is used to quantify data valuation~\cite{pmlr-v97-ghorbani19c,Wang_contributions_2019}. Specifically, the Shapley value of a datum computes the average of the marginal performance of this datum with any subsets of remaining data. The agent's contribution can then be measured by summing up the Shapley value of all data of that agent. For a pre-specified learning task, the Shapley value of an agent can be different for different chosen models -- it is model-dependent~\cite{pmlr-v97-ghorbani19c}. Thus, data Shapley is not a consistent metric to agent's contribution for a pre-specified federated learning task. The self-interested agents may quit the collaboration due to perceived unfair treatment if the reward allocation is based on Shapley value. We elaborate this in Section~\ref{section:identifying contribution of agents}. 

Instead, we propose the use of \emph{publicly verifiable factors} of agents to measure participating agent's contributions, such as the task-related data volume, data range, data collection cost, etc (details in Section~\ref{section:identifying contribution of agents}). As long as agents should reach a consensus on the chosen publicly verifiable factors, they have to agree to be contractually bound~\cite{Holmstrom_1991} and commerce the federated learning. This approach circumvents the inconsistency issue of model-dependent methods such as data Shapley and influence functions~\cite{Richardson_2019_arxiv}.

For the second question \textbf{(b)} w.r.t. fair rewards, we propose a proportionate reward system i.e. agents who are deemed more valuable will receive more model updates. 
To achieve this, we first classify all agents into different levels based on their publicly verifiable factors. Agents at the same level are deemed to have the same contribution and will be rewarded equally. Let us assume that low level agents contribute less and high level agents contribute more.
Then we train multiple models at every level such that the high-level agents only contribute the roughly same amount of data as the low-level agents own when the low-level model is federally learned. On the other hand, high-level agents get access to the low-level models to federally learn their high-level models. Our theoretical analysis shows that a model with more training data can potentially have less generalization error. In such a federated learning framework, the agents at the same contribution level share the same model and the agents at a higher contribution level can share a better model, which aligns with the proportional fairness notion. 

Our proposed method is called hierarchically fair federated learning (HFFL). 
Based on HFFL, we also design an improvement, namely, HFFL$+$. It allows different models (e.g., different structured deep neural networks) at different levels at the expense of training time which makes the framework is more flexible and capable.
We run our algorithms on different datasets to test the fairness notion, i.e., agents in higher levels attain higher rewards.



Our main contributions in this paper are:
\begin{itemize}
	\item We employ the reward schemes of the management in federated learning, i.e., more contribution leads to more reward.
	\item We propose a novel hierarchical federated learning framework to achieve proportional fairness so that it facilitate collaborations among agents.
	\item Empirical evaluation of our methods on four datasets, i.e., census dataset ADULT, vision datasets MNIST as well as Fashion MNIST and text dataset IMDB confirms our frameworks in upholding the fairness.
\end{itemize}

The rest of this paper is organized as follows. The related work is surveyed in Section~\ref{Section:fairness_review}. How to measure an agent's contribution is discussed in Section~\ref{section:identifying contribution of agents}. The proposed federated learning framework and theoretical analysis are presented in Section~\ref{section:HFFL}. Experimental results are showed in Section~\ref{section:experiments} followed by the conclusion and future work in Section~\ref{section:conclusion}.

\section{Related work}
\label{Section:fairness_review}
\textbf{Fairness in federated learning}. Fairness in machine learning is often defined as a notion of protecting against discrimination of some specific features in data, e.g., minorities. A number of prior studies has focused on addressing this feature-level fairness in models. Two commonly adopted strategies are per-processing sensitive features such as deletion or transformation ~\cite{Feldman:2015:CRD:2783258.2783311} and modifying existing models to limit discrimination ~\cite{pmlr-v54-zafar17a,Goh_NIPS_2016}. Fairness has also been considered in resource division in multi-agent systems. In this context, the resource provided by the environment needs to fairly shared by all agents. Some typical work includes the maximin sharing policy~\cite{NIPS2014_5588}, which improves the performance of the worst agent, and the fair-efficient policy which makes the variation of agents' utilities as small as possible ~\cite{Jiang_fairnessMA_2019}.

In federated learning, existing work on fairness aims to ensure accuracy across all agents. For example, \cite{pmlr-v97-mohri19a} proposed agnostic federated learning (AFL), which minimizes the maximal loss function of all agents with a consideration on the overall performance. \cite{Li_fair_2019} proposed q-Fair Federated Learning (q-FFL) to encourage a more uniform accuracy distribution across all agents, where $q$ is a trade-off parameter between fairness and accuracy. They do not take the contribution of agents into consideration. Our notion of fairness is not to optimize the accuracy across all agents, but is to ensure that the agents which contribute more receive proportionately more reward.  \\ 
\textbf{Incentive design in federated learning}.  Our employed fairness is also a kind of an incentive to encourage participation of agents based on their resources. \cite{Richardson_2019_arxiv} used a similar incentive where the central server pays agents proportional to their data valuation. They employed influence functions for data valuation. Influence functions quantify how much the model’s predictions would change if that datum was not used in the training process and is in fact the leave-out-one (LOO) method. The LOO method assigns zero value to the duplicate of a datum~\cite{pmlr-v97-ghorbani19c}. If two agents happened to have exactly the same data, then the extra data is deemed to be of zero value. 
Furthermore, the LOO method is also model-dependent, which has the same issue of data Shapley.       
In contrast, our work uses publicly verifiable factors of agents to identify agents' contributions, which are free from model dependency.    

\label{Section:incentive_for_collaboration}

\section{Identifying Contribution of Agents }
\label{section:identifying contribution of agents}
Identifying agents' contribution is a key driver to the success of collaboration~\cite{Parung_business_2008}. 
In this subsection, we first show why data Shapley is not a consistent metric of agents' contributions in federated learning. We then discuss economic and social factors that are used to measure different agents' contribution under the collaboration.

\subsection{Data Shapley Is Not A Suitable Metric}
For the same task, data valuation based on data Shapley is (1) model dependency, (2) negativity, and (3) evaluations metric dependency~\cite{pmlr-v97-ghorbani19c}. In this section, we verify points (1) and (2) and show why data Shapley is not a suitable measure of the contributions of the agents and hurdle their incentives to participate in the federated learning. 

\begin{figure}[tp!]
	\centering
	\includegraphics[scale=0.2]{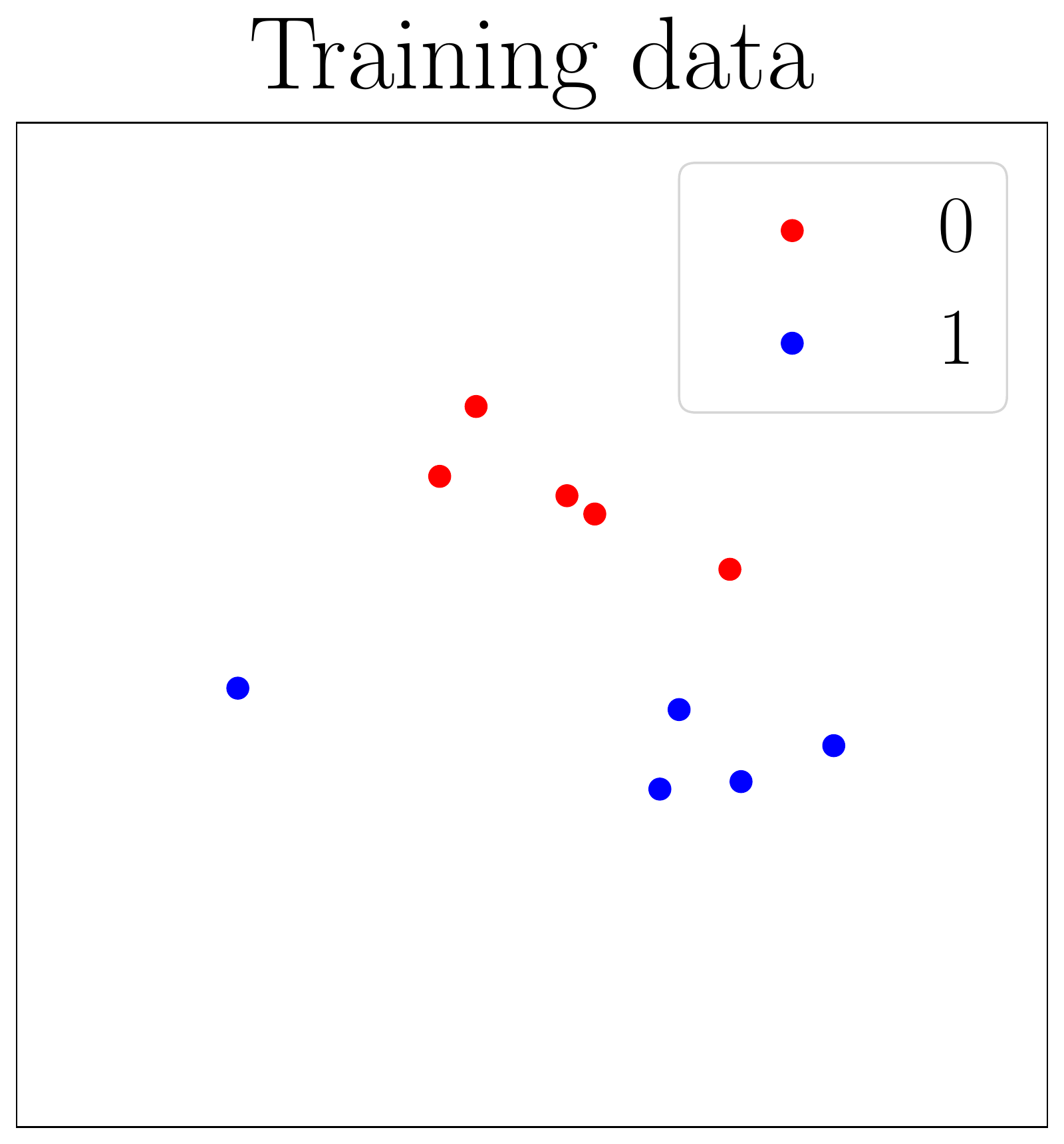} 
	\hspace{11mm}
	\includegraphics[scale=0.2]{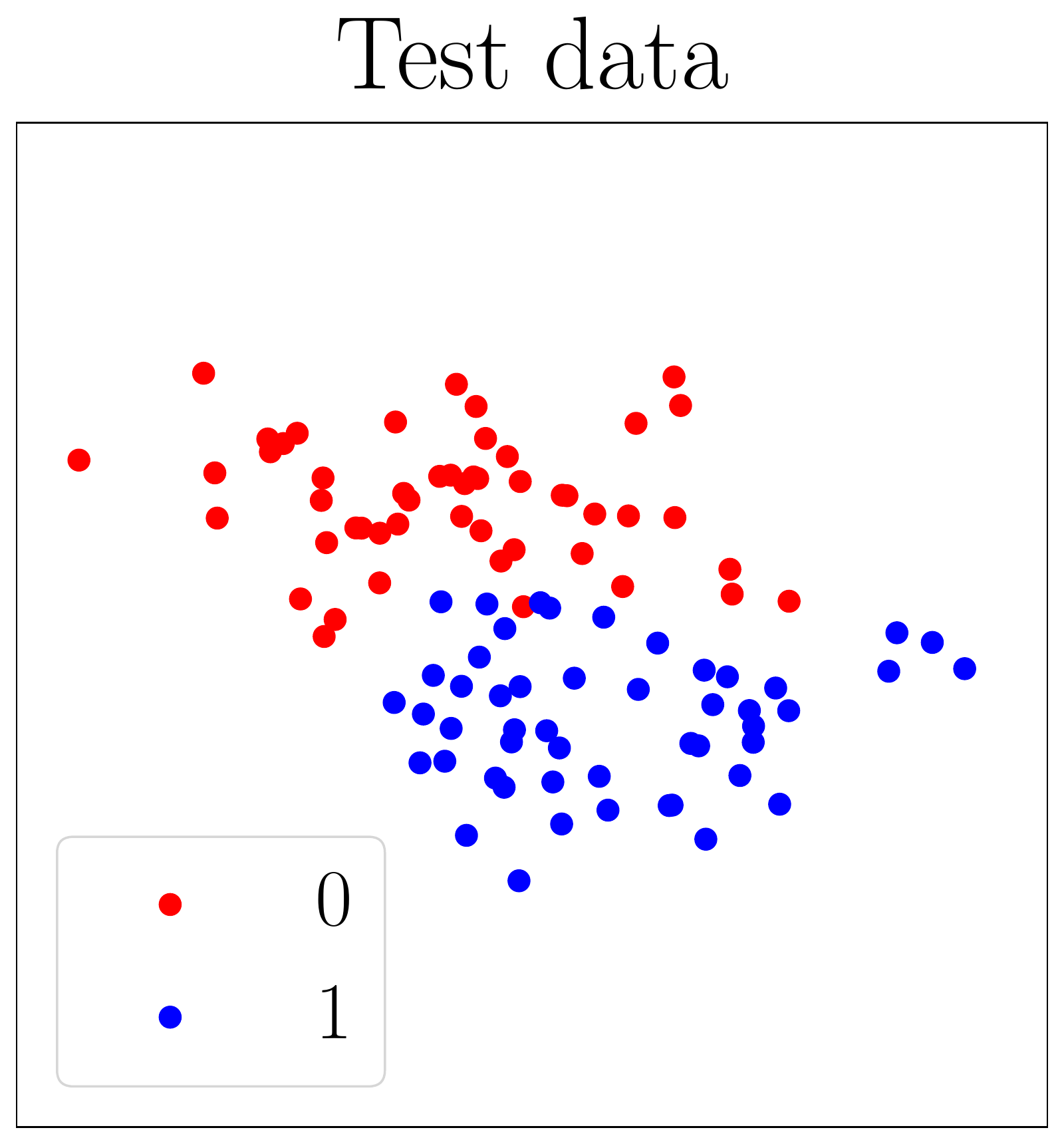} \\
	\includegraphics[scale=0.22]{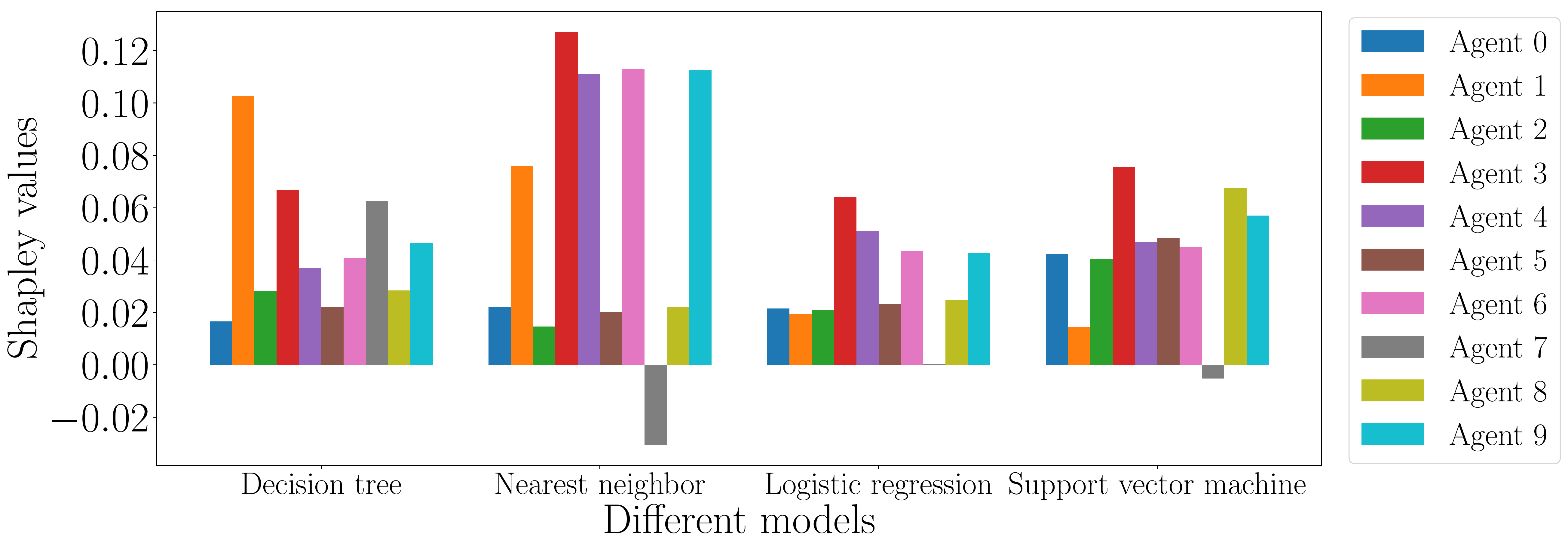}
	\caption{\textit{Top}: training data and test data for calculating data Shapley. Each agent holds two data from training and uses the same test data.
		\textit{Bottom}: Shapley values of different agents across different models for the binary classification.}
	\label{fig:Shapley_values_across_agents}
\end{figure}

In machine learning, data Shapley is used to measure the value of each training datum towards the predictor performance. An agent's contribution is measured by summing the Shapley value of all data from the agent~\cite{pmlr-v97-ghorbani19c}.

Given a learning algorithm $\mathcal{A}$ taking a training dataset $S \subseteq \mathcal{D} = \{(\mathbf{x}_i, y_i)\}^{m}_{i=1}$ as input and returning a model, data Shapley is a metric to quantify the value of each training data towards the predictor performance. 
In supervised machine learning, the Shapley value $\phi_i$ of a datum $(\mathbf{x}_i, y_i)$ is calculated by
\begin{align*}
\phi_i = C \sum_{S \subseteq D - \{(\mathbf{x}_i, y_i)\}} \frac{V(S \cup \{(\mathbf{x}_i, y_i)\}) - V(S)}{ \binom{n-1}{|S|}  }
\end{align*}
where $C$ is a scaling constant and $V$ is the performance score of the predictor trained on dataset $S$. $V(S)$ is in short for $V(\mathcal{A}(S))$.

We conduct an experiment to illustrate Shapley value of agents, where there are 10 agents, each holding only one data point at the top left panel in Figure~\ref{fig:Shapley_values_across_agents}. They collaboratively learn a binary classification model. The performance score $V(S)$ is calculated based on the accuracy of the returned predictor $\mathcal{A}(S)$ on test data in the top right panel. 
The bottom panel in Figure~\ref{fig:Shapley_values_across_agents} shows the Shapley value across different agents over different learning algorithms, i.e., decision tree, nearest neighbor, logistic regression and support vector machine. For the same classification task, we can see that the contribution of agents measured by data Shapley varies significantly over the learning algorithms. 
For example, Shapley value of Agent 4 is largest when $\mathcal{A}$ uses nearest neighbor, but it becomes relatively smaller when decision tree is used. 

In addition, agent 7 even has a negative Shapley value when nearest neighbor and support vector machine is used. It is unrealistic to expect self-interested agents to participate if they get negative rewards and pay the cost of contributing data. 
Furthermore, data with negative Shapley value might be valuable and cannot be neglected. These data points might be outliers which impact negatively on the performance of the predictor, but they may represent the rarest of cases that should be recognized and classified~\cite{mathews2019learning}. For example, they could be critical in areas such as medical diagnosis, IoT sensors, fraud detection and intrusion detection.

To conclude, for a specific learning task (classification or regression), self-interested agents can find reward allocation based on data Shapley to be unfair and may quit from the collaboration. Thus, data Shapley is not a suitable metric to identify an agent's contribution.

\subsection{Contribution Measures}
Exactly identifying agents' contribution is often difficult in a federated network since multiple factors are involved such as agents' reputation, communication bandwidth, computation resource, data amount and quality, etc. Inspired by the rating mechanism in the finance industry such as Standard \& Poor's which issues credit ratings with the levels from AAA to D for public and private companies, governmental entities and etc, we propose  classifying all agents into multiple distinct contribution levels and then reward them based on their contribution levels.

We use the publicly verifiable factors of agents to classify them, such as data quality, data volume, cost of data collection, etc. Our strategy is more likely to be task-dependent rather than model-dependent. Agents should come to a consensus on which factors are relevant for the given task. It is also practically enforceable using contracts~\cite{Holmstrom_1991}, and collaboration can be assured given fair rewards. This simple strategy is easy to understand. What is more, a trustworthy third party can also be employed to rate agents and classify them into different levels in practice.

\begin{figure}[tp!]

	\begin{centering}
		\includegraphics[scale=0.3]{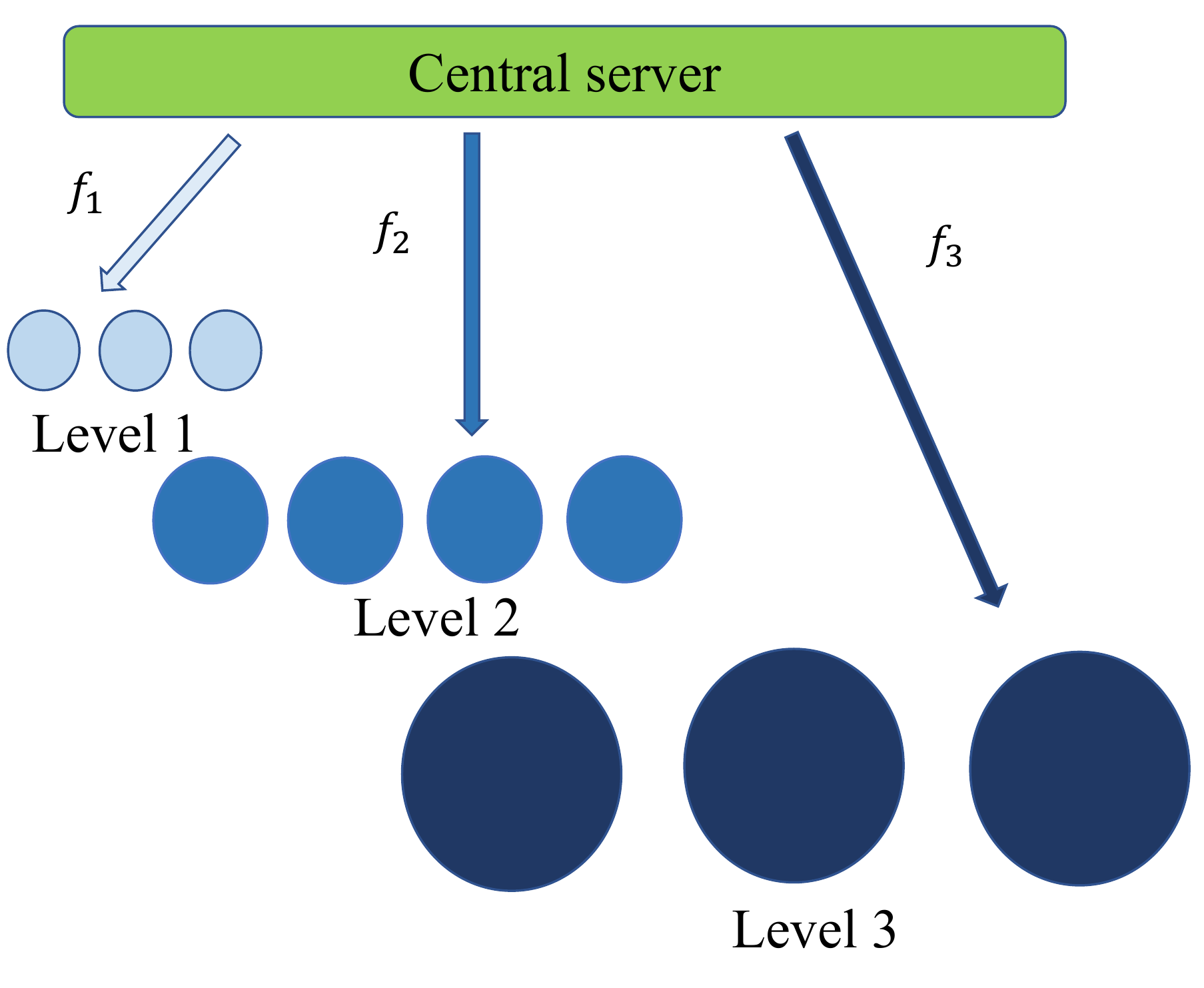}
		\includegraphics[scale=0.37]{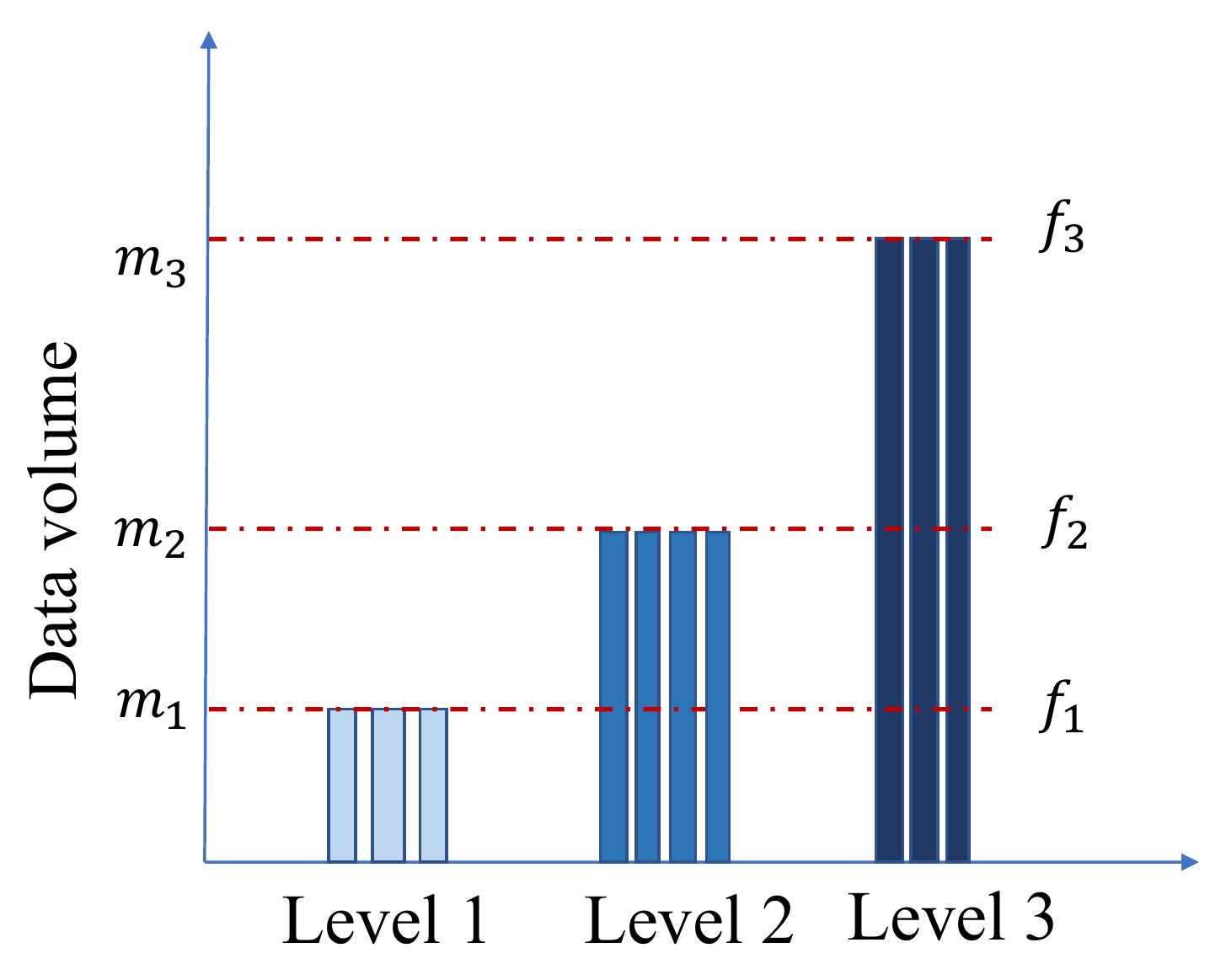}
		\par
	\end{centering}

	\caption{An illustrating example of hierarchically fair federated learning (HFFL). There are three levels ($L = 3$). The \textit{left} figure presents the hierarchical structure, where agents are categorized into different levels for collaboration. The \textit{right} figure shows how agents contribute their local data for federated learning. 
		Agents at a lower level contribute less data and get a lesser reward (e.g., obtaining model $f_1$), and agent at a higher level contribute more data and get a better reward (e.g., obtaining model $f_3$).  }
	\label{fig:HFFL_Structure}

\end{figure}

\section{Hierarchically Fair Federated Learning}
\label{section:HFFL}
In this section, we design a novel hierarchical federated learning framework, i.e., HFFL, that incorporates the fairness notion. 
Agents who contribute more to federated learning are rewarded more in this framework.  The agents at the different contribution level thus receive the different number of model updates. 

\subsection{HFFL Algorithm}
Once the participating agents come to a consensus about their contribution levels, they can collaborate using our HFFL algorithm. In HFFL, all agents hold their data locally without sharing data directly but they share the model parameters with the central server as FedAvg~\cite{McMahanMRHA17_aistat_fed_learning} does. 

In our HFFL, there are $L$ contribution levels with each level having $N_{l}~(l=1,\cdots,L)$ agents $A_{l1}, \cdots, A_{{l}N_{l}}$.  
We assume agents at a given level have the same amount of data (or have the same metric value that takes all data characteristics into account). Agents at level $l$ has $m_{l}$ amount of data. 
Agents at the same level are rewarded with the same machine learning model $f_{l}$. We provide an example of the HFFL structure in Figure~\ref{fig:HFFL_Structure}. There are 3 agents ($N_1= 3$) at level 1 with each agent holding $m_1$ amount of data locally, 4 agents ($N_2 = 4$) at level 2  with each agent holding $m_2$ amount of data, and 3 agents  ($N_3 = 3$) at level 3 with each holding $m_3$ amount of data. Agents at a lower (higher) level normally contribute less (more) amount of data, thus $m_1 < m_2 < m_3$. 

HFFL aims to federally learn different models at different contribution levels.
To learn the model $f_{l}$ at the level $l$, higher-level agents contribute the same amount of data as level-$l$ agents have, while level-$l$ and lower-level agents should contribute all the data they have.
The data of high-level agents contributing to the lower level is randomly sampled (since only a subset is used). In our HFFL, we assume agents and the central server are benign so there is no cheating. We defer the adversarial settings to future work.

For example, in Figure~\ref{fig:HFFL_Structure}, model $f_1$ is federally learned based on $ ( (N_1+N_2+N_3) \cdot m_1 )$ amount of data, which includes all data contributed by agents at level 1 and $m_1$ amount of data sampled from every agent at higher levels.  Similarly, model $f_2$ is federally learned based on $( N_1 \cdot m_1 + (N_2+N_3) \cdot m_2 )$ amount of data, which includes all data from agents at level 1 and level 2 plus $m_2$ amount of data sampled from each agent at level 3. The model $f_3$ is federally learned based on $( N_1 \cdot m_1 + N_2 \cdot m_2 + N_3 \cdot m_3 )$ amount of data.
Our theoretical analysis in Section~\ref{section:theoretical_analysis} shows that a model learned with more data has higher confidence to generalize better. 
Therefore, the agents at a higher contribution level have a better machine learning model. We assume that a better model implies a higher reward.
The implementation details are in Algorithm~\ref{alg:HFFL}.


\begin{algorithm}[h!]
	\caption{Hierarchically fair federated learning (HFFL)}
	\label{alg:HFFL}
	\textbf{Input}: Level number $L$, each level $l$ having $N_{l}$ agents from $A_{l1}, \cdots, A_{{l}N_{l}}$, each agent at level $l$ contributes $m_l$ amount of its local data,
	machine learning models with the same architecture $\mathcal{F}$, a central server $\mathcal{C}$, communication rounds $T$.  \\ 
	\textbf{Output}: Models $\{f_{l}\}_{l=1}^{L}$ for $L$ levels. 
	
	\begin{algorithmic}[1] 
		\STATE $\mathcal{C}$ initializes a model $f_0 \in \mathcal{F}$.
		\FOR{level $l = 1, 2, ..., L$} 
		\STATE $\mathcal{C}$ initializes a model for level $l$: $f_{l} \leftarrow f_{l-1}$.
		\STATE Denote all participating agents at level-$l$ federated learning as an agent set $A = \{A_{ij}\}, i=l,\cdots,L, j=1,\cdots,N_{i}$. 
		\FOR{communication round $t = 1, 2, \dots, T$ }
		
		\STATE $\mathcal{C}$ sends current parameters $w_{t}^{l}$ of $f_{l}$ to each agent in set $A$.
		
		\FOR{each agent $A_{ij}$ in $A$ \textbf{in parallel}}
		\STATE $w^{ij}_t \leftarrow $ AgentUpdate$(i, j, w_{t}^{l})$ \\
		\COMMENT{* $w^{ij}_t \leftarrow \eta \nabla \ell (w_{t}^{l}; S^{l}_{ij})$, where $S^{l}_{ij}$ is agent $A_{ij}$ local data for level $l$, $|S^{l}_{ij}| = m_l$, $\eta$ is the learning rate.}
		\ENDFOR 
		\STATE $ w^{l}_{t+1} \leftarrow \frac{1}{N_{l}+\cdots+N_{L}}\sum_{i={l}}^{L}\sum_{j=1}^{N_{i}}w_{t}^{ij}$ \\
		\COMMENT{*$\mathcal{C}$ receives local model parameters $w^{ij}_t$ of each agent $A_{ij}$ in $A$ and then updates $f_{l}$ with $w^{l}_{t+1}$. }
		\ENDFOR
		\STATE Output federated learned model $f_l$ for level $l$.
		\ENDFOR
	\end{algorithmic}
\end{algorithm}

\begin{remark} \upshape
	Each agent holds their data locally without sharing directly. Without federated learning, each agent has access to very limited (its own) data. 
	But when an agent participates in federated learning, it has (indirectly) access to other agents' data.
	Besides, in terms of knowing unknowns, it is strictly better off than when not participating. 

	In addition, an agent at a higher contribution level needs to contribute more amount of data but ends up obtaining a better model. Thus, HFFL encourages agents to collect and contribute more data in order to get promoted to a higher level.
\end{remark}

\begin{remark} \upshape
	Our HFFL framework is flexible. 
	It allows an agent at a lower level (e.g., level $l=1$) to get promoted to a higher level (e.g., level $l=2$) as long as all agents at higher levels (i.e., $l\geq2$) have no objections so that this agent could obtain a better model (i.e., $f_2$). The better model is trained on more data. But this agent will probably need to somehow compensate the higher-level agents (e.g. pay money or promise more data in future) to get approval for such a promotion.
\end{remark}

\begin{remark} \upshape
	The HFFL framework can maintain the same training time as FedAvg~\cite{McMahanMRHA17_aistat_fed_learning} since HFFL finetunes low-level models when training a higher-level model. Note that HFFL has the flexibility to incorporate other different federated learning strategies, e.g., privacy preserving federated learning~\cite{DBLP:conf/sp/NasrSH19_reza,nips/AgarwalSYKM18_dp_fl_distributed_sgd} and robust federated learning~\cite{HaoLXLY19_dp_fl}. 
\end{remark}

\paragraph{HHFL$+$} In HFFL, agents at different levels have models with the same architecture.
However, a more complicated model tends to overfit the small number of data. Agents in lower levels having less data probably prefer simpler machine learning models, while agents having more data probably prefer more complex models. To facilitate this, we design an improved version of HFFL, namely, HFFL$+$. It runs HFFL multiple times, each with different $\mathcal{F}$, i.e., models with different architectures. We then select the best-performing models for each level based on the models' test accuracy.

\subsection{Theoretical Analysis}
\label{section:theoretical_analysis}
In this section, we theoretically justify (a) an agent that participates in federated learning has gain, (b) a higher level model can potentially have a less generalization error, which is aligned with our proportional fairness notion.

Let $S= \{ (\mathbf{x}_{i},y_{i})\}_{i=1}^{m}$ ($|S| = m$) denote the training data sampled from an unknown distribution $\mathcal{D} = \mathcal{X} \times \mathcal{Y}$. Suppose we have a finite model set $\mathcal{F},$ and the bounded loss function $\ell$. 
For a specific level in HFFL, the federated learning algorithm is to utilize all available training data $S$ to learn a model $f: \mathcal{X} \rightarrow \mathcal{Y}$ ($f \in \mathcal{F}$). The aim is to minimize generalization error $L_{\mathcal{D}}(f) {:=} \mathbb{E}_{(\mathbf{x}_i,y_i)\sim\mathcal{D}}\ell(f;\mathbf{x}_i,y_i)$ through minimizing the empirical error $L_{S}(f){:=}\frac{1}{m}\sum_{i=1}^{m}\ell(f;\mathbf{x}_{i},y_{i})$. We have the following lemma. 

\begin{lemma}
	\label{lemma1}
	Given a training set $S$ with $m$ data, an error rate $\epsilon$ and a bounded loss $\ell$ within the range $[a,b]$, set $\delta=2|\mathcal{F}|\exp(-2m\epsilon^{2}/(b-a)^2)$. Then, 
	\begin{align}
	\left|L_{\mathcal{D}} (f) - L_{S}(f) \right|<\epsilon, \exists f\in\mathcal{F}   \nonumber
	\end{align}
	holds with probability $\geq1-\delta$.
\end{lemma}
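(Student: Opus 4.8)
The plan is to recognize Lemma~\ref{lemma1} as the classical finite-hypothesis-class uniform convergence bound and to prove it by pairing Hoeffding's inequality with a union bound over $\mathcal{F}$. I read the displayed inequality as holding \emph{simultaneously for all} $f\in\mathcal{F}$ (equivalently, $\sup_{f\in\mathcal{F}}|L_{\mathcal{D}}(f)-L_S(f)|<\epsilon$); this is the only reading under which the factor $|\mathcal{F}|$ appearing in $\delta$ is needed, and it is also what is actually required downstream, since the model returned by the federated learning algorithm depends on $S$.

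First I would fix an arbitrary $f\in\mathcal{F}$ and set $Z_i:=\ell(f;\mathbf{x}_i,y_i)$ for $i=1,\dots,m$. Because the setup posits that $S$ is sampled from $\mathcal{D}$, the $Z_i$ are i.i.d., each lies in the interval $[a,b]$, and $\mathbb{E}[Z_i]=L_{\mathcal{D}}(f)$ while their average equals $L_S(f)$. Hoeffding's inequality for sums of bounded independent variables then gives, for this fixed $f$,
\begin{align}
\Pr\!\left(\left|L_S(f)-L_{\mathcal{D}}(f)\right|\ge\epsilon\right)\;\le\;2\exp\!\left(-\frac{2m\epsilon^{2}}{(b-a)^{2}}\right). \nonumber
\end{align}

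Next I would apply the union bound over the finitely many hypotheses: the event that $|L_S(f)-L_{\mathcal{D}}(f)|\ge\epsilon$ for at least one $f\in\mathcal{F}$ has probability at most $|\mathcal{F}|$ times the per-hypothesis bound, i.e.\ at most $2|\mathcal{F}|\exp(-2m\epsilon^{2}/(b-a)^{2})$, which is exactly the $\delta$ in the statement. Taking the complement shows that with probability at least $1-\delta$ the inequality $|L_{\mathcal{D}}(f)-L_S(f)|<\epsilon$ holds for every $f\in\mathcal{F}$, as claimed. To connect with the proportional-fairness discussion, one then notes that $\delta$ is decreasing in $m$, so a level trained on more data enjoys either a smaller $\epsilon$ at fixed confidence or a higher confidence at fixed $\epsilon$.

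I do not anticipate a genuine obstacle; the statement is textbook (essentially the finite-class generalization bound in Shalev-Shwartz \& Ben-David). The points needing care are (i) making explicit the i.i.d.\ assumption on $S$ that the federated setup leaves implicit --- here $S$ is the pooled contributed data at a level, treated as i.i.d.\ from $\mathcal{D}$; (ii) tracking the range of the loss so that Hoeffding contributes $(b-a)^{2}$ in the exponent, which hinges on how the summands are centered; and (iii) using finiteness of $\mathcal{F}$, without which the union bound is vacuous. If only the one-sided bound $L_{\mathcal{D}}(f)-L_S(f)<\epsilon$ were wanted --- which is all the ``more data cannot hurt generalization'' conclusion needs --- the leading factor $2$ in $\delta$ could be dropped by invoking the one-sided Hoeffding inequality.
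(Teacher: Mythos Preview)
Your proposal is correct and matches the paper's own proof essentially step for step: fix $f$, apply Hoeffding's inequality to the i.i.d.\ bounded losses $\ell(f;\mathbf{x}_i,y_i)$, then take a union bound over the finite class $\mathcal{F}$ to obtain the stated $\delta$. Your reading of the quantifier as ``for all $f\in\mathcal{F}$'' is also the intended one, as the paper's union-bound step makes explicit.
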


\begin{proof}
	To bound the difference between the empirical error $L_{S}(f)$ and the generalization error $L_{\mathcal{D}}(f)$, 
	we use Hoeffding's inequality.\\ 
	(\textbf{Hoeffding's inequality}) Let $\theta_{1},\cdots,\theta_{m}$ be a
	sequence of $i.i.d.$ random variables and assume that for all $i$,
	$\mathbb{E}[\theta_{i}]=\mu$ and $\mathbb{P}[a\leq\theta_{i}\leq b]=1$.
	Then, for any $\epsilon > 0$,
	\begin{align}
	\mathcal{P}\left(\left|\frac{1}{m}\sum_{i=1}^{m}\theta_{i}-\mu\right|>\epsilon\right)\leq2\exp(-2m\epsilon^{2}/(b-a)^{2}) \nonumber
	\end{align}
	The proof of Hoeffding's inequality can be found in \cite{Shai_2014_understandingML}. 
	
	Since $\{(x_{i},y_{i})\}_{i=1}^{m}$ are $i.i.d$ sampled, the $\{\ell(f;x_{i},y_{i})\}_{i=1}^{m}$ are $i.i.d$. Let us set $\theta_{i}$ $=$ $\ell(f;x_{i},y_{i})$. By applying Hoeffding's inequality, we obtain 
	\begin{align}
	\mathcal{P}\left(\left\{ S:,|L_{S}(f)-L_{\mathcal{D}}(f)|>\epsilon\right\} \right)\leq2\exp(-2m\epsilon^{2}/(b-a)^{2}). \nonumber
	\end{align}
	We further apply union bound to yield 
	\begin{align*}
	& \mathcal{P}\left(\left\{ S:\exists f\in\mathcal{F},|L_{S}(f)-L_{\mathcal{D}}(f)|>\epsilon\right\} \right)\\
	& \leq\sum_{f\in\mathcal{F}}2\exp(-2m\epsilon^{2}/(b-a)^{2})\\
	& =2|\text{\ensuremath{\mathcal{F}|\exp(-2m\epsilon^{2}/(b-a)^{2})}}. \nonumber
	\end{align*}
	Therefore, we prove Lemma \ref{lemma1} with $\delta=2|\mathcal{F}|\exp(-2m\epsilon^{2}/(b-a)^2)$.
\end{proof}
Note that in Lemma~\ref{lemma1}, we assume $\mathcal{F}$ is finite. For an infinite $\mathcal{F}$, 
we can obtain a similar error bound using Rademacher complexity~\cite{Shai_2014_understandingML}. 

In Lemma~\ref{lemma1}, for a fixed error rate $\epsilon$, the probability $1-\delta$ increases with the amount of data $m$. 
This indicates that the learning algorithm with more data could return a model that can achieve the desirable error rate of $\epsilon$ with a higher probability. 
Since an agent that participates in federated learning can obtain a model that learns data from other agents, the agent could obtain the model with less generalization error than the model that is learned only on its local data. Thus, an agent that participates in federated learning has the gain.

On the other hand, in HFFL the higher-level model that can learn from more samples can have a less generalization error with a higher confidence according to Lemma~\ref{lemma1}.
Thus, our proposed algorithms (HFFL and HFFL$+$) where a higher level agent contributing more data could receive more reward (i.e., a model with less generalization error) aligns with our proportional fairness notion. 

\section{Experiments}

\label{section:experiments}
We conduct experiments with four datasets to validate HFFL and HFFL$+$ aligning with the fairness notion: More contribution levels has more rewards. 
\begin{table}[h!]
	\centering
	\begin{tabular}{lrrr}  
		\toprule
		Dataset  & $m_1$ ($l=1$)  & $m_2$ ($l=2$)  & $m_3$ ($l=3$) \\
		\midrule
		ADULT       & 200  & 500  &   2,000  \\
		MNIST    & 200  & 500 &  2,000     \\
		F-MNIST   &   400 & 1000 &  4000    \\
		IMDB            & 256  & 512   & 3584  \\
		\bottomrule
	\end{tabular}
	\caption{The amount of training data across agents at different levels. An agent at level $l$ randomly samples $m_l$ amount of data without replacement. }
	\label{tab:exp_training_data_allocations}
\end{table}

\begin{table}[h!]
	\centering
	\begin{tabular}{lrrr}  
		\toprule
		Dataset  & model $\mathcal{F}_1$ (red)  & model $\mathcal{F}_2$ (blue) & LR \\
		\midrule
		ADULT       & 2-layer MLP  & Logistic regression   &   0.01   \\
		MNIST    & 3-layer MLP  & 4-layer CNN &  0.01     \\
		F-MNIST   & 4-layer MLP & 4-layer CNN &  0.003    \\
		IMDB            & Bi-LSTM  & LSTM   &  0.001  \\
		\bottomrule
	\end{tabular}

	\caption{The different models used in HFFL for different datasets. HFFL uses the same model across agents at different levels for the same dataset.  LR is the learning rate (LR) of ADAM optimizer. }
	\label{tab:exp_settings}

\end{table}

\textbf{ADULT} is a census dataset from the UCI Machine Learning Repository~\cite{Dua:2019}. It is for the binary classification task predicting whether the personal income exceeds $\$50,000$ based on 14 attributes such as age, occupation, native country and so on. There are $31,561$ records of training data and $16,281$ records of test data. 

\textbf{MNIST}~\cite{lecun2010mnist} is handwritten digits dataset that has a training set of $60,000$ examples, and a test set of $10,000$ examples. It is for the classification task of recognizing 10 digits from 0 to 9. 

\textbf{Fashion MNIST} (F-MNIST) dataset~\cite{xiao2017_fashion_mnist} is an MNIST-like dataset with clothing images of 10 classes instead of handwritten digits. It consists of a training set of 60,000 examples and a test set of 10,000 examples. 

\textbf{IMDB} dataset~\cite{maas-EtAl:2011:ACL-HLT2011_imdb} has 50K movie reviews for natural language processing for text analytics. It is a dataset for binary sentiment classification split into a training set of $25,000$ movie reviews and a test set of $25,000$. 

In all the experiments, we have three hierarchical levels ($L=3$), with level 1 having 20 agents, level 2 having 10 agents and level 3 having 4 agents, i.e., $N_1 = 20$, $N_2 = 10$ and $N_3=4$. 
Each agent has its local data randomly drawn from training data without replacement (the training data allocation refers to Table~\ref{tab:exp_training_data_allocations}).  

In all experiments of HFFL, we set the number of communication rounds $T=10$.
In Figure~\ref{fig:HFFL}, the model score is measured by test accuracy on test data. For HFFL we choose different model architectures (refer to Table~\ref{tab:exp_settings}). For HFFL $+$, only the best-scoring model is returned for each level.    
We compare models' qualities (measured by model score) across three levels obtained by HFFL and HFFL$+$.

\subsection{Experimental Results and  Analysis}
In all panels of Figure~\ref{fig:HFFL}, for HFFL (blue and red lines), a model with a higher score is returned for a higher contribution level. This is expected since high-level agents can exploit all data from lower-level agents while the low-level agents can gain from a limited amount of data (equal to what they have) from higher-level agents. The experimental results validate the fairness notion: More contribution leads to more reward. 

\begin{figure*}[tp!]
	\centering
	\includegraphics[scale=0.15]{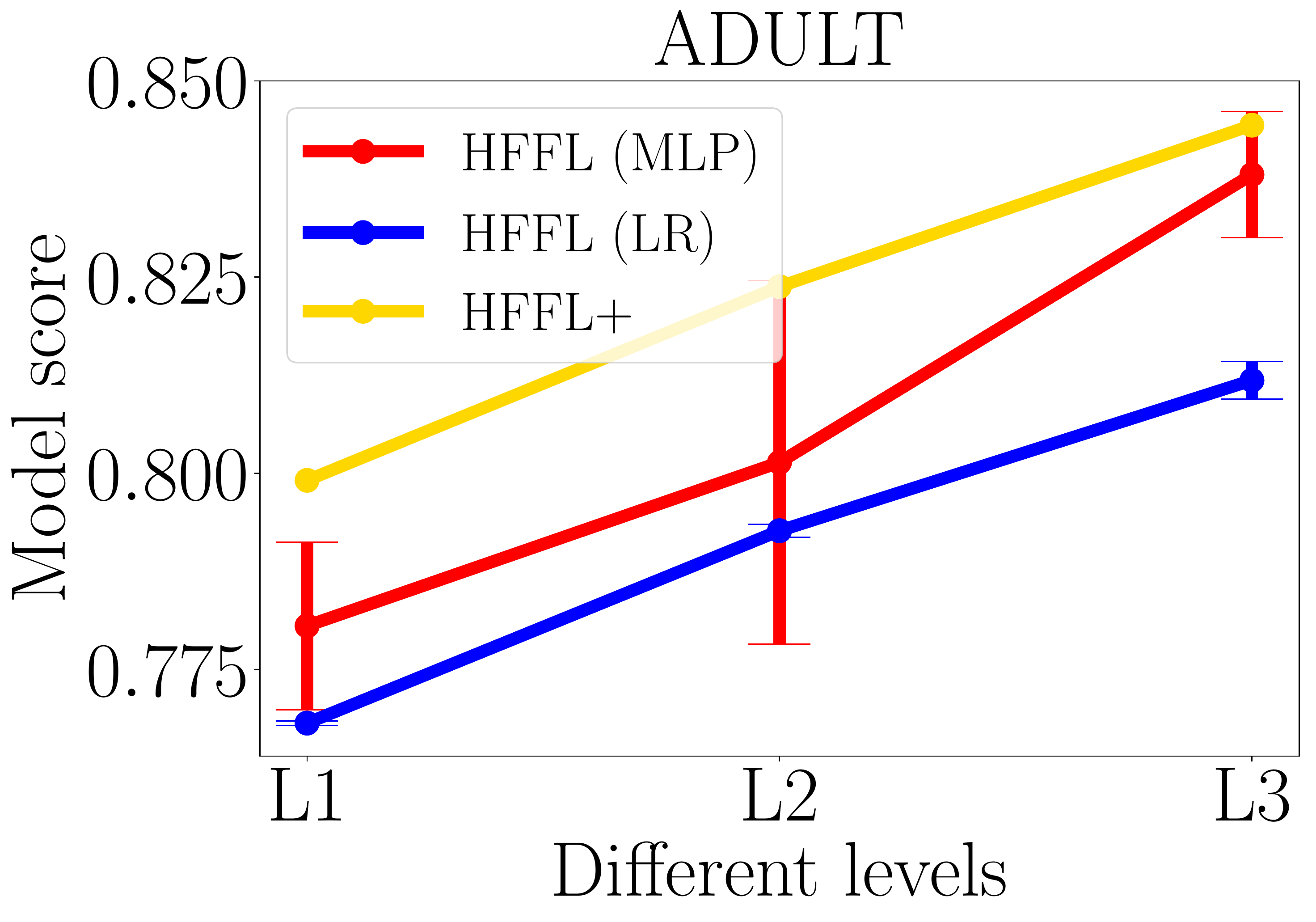}
	\includegraphics[scale=0.15]{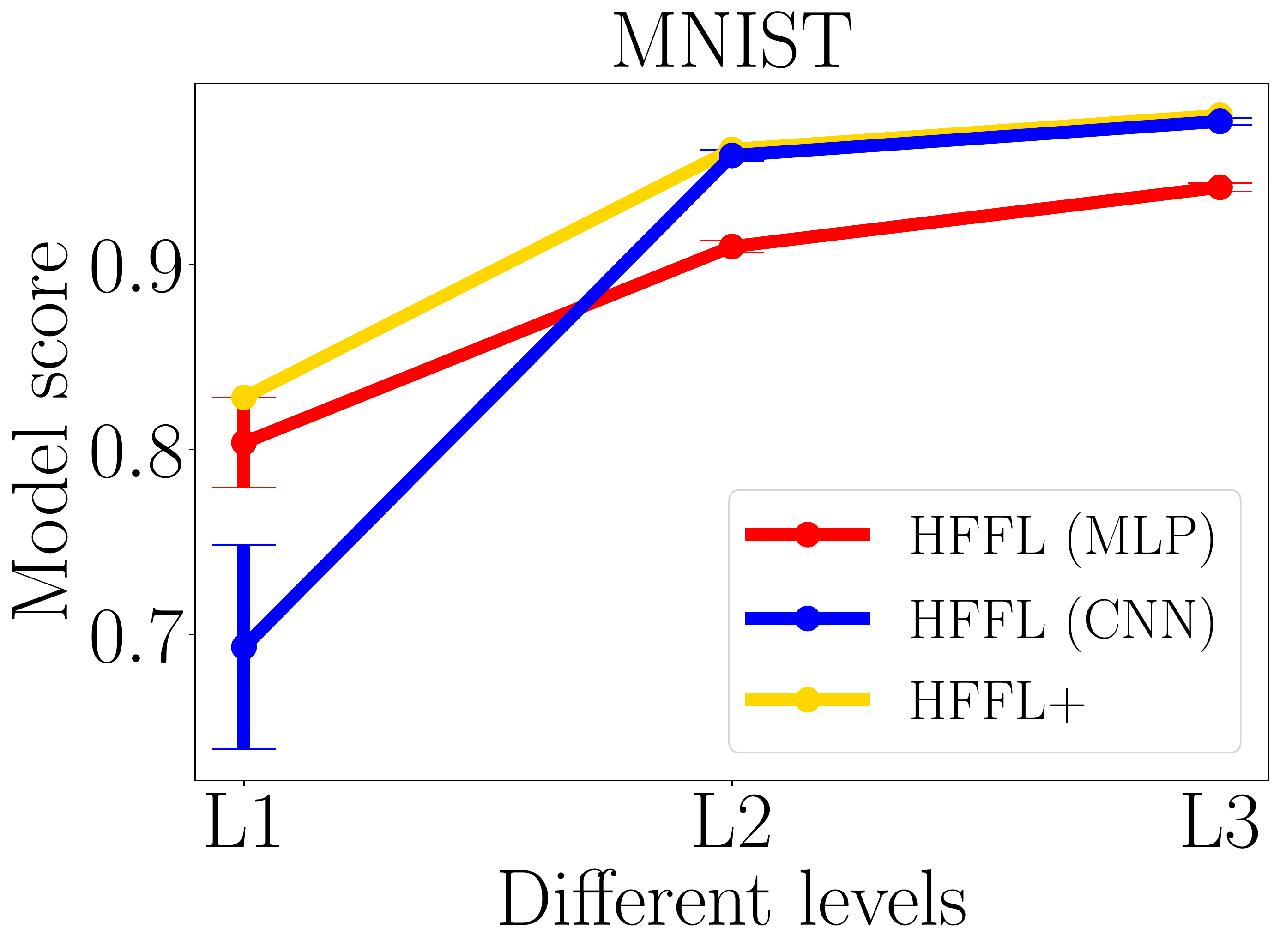} 
	\includegraphics[scale=0.15]{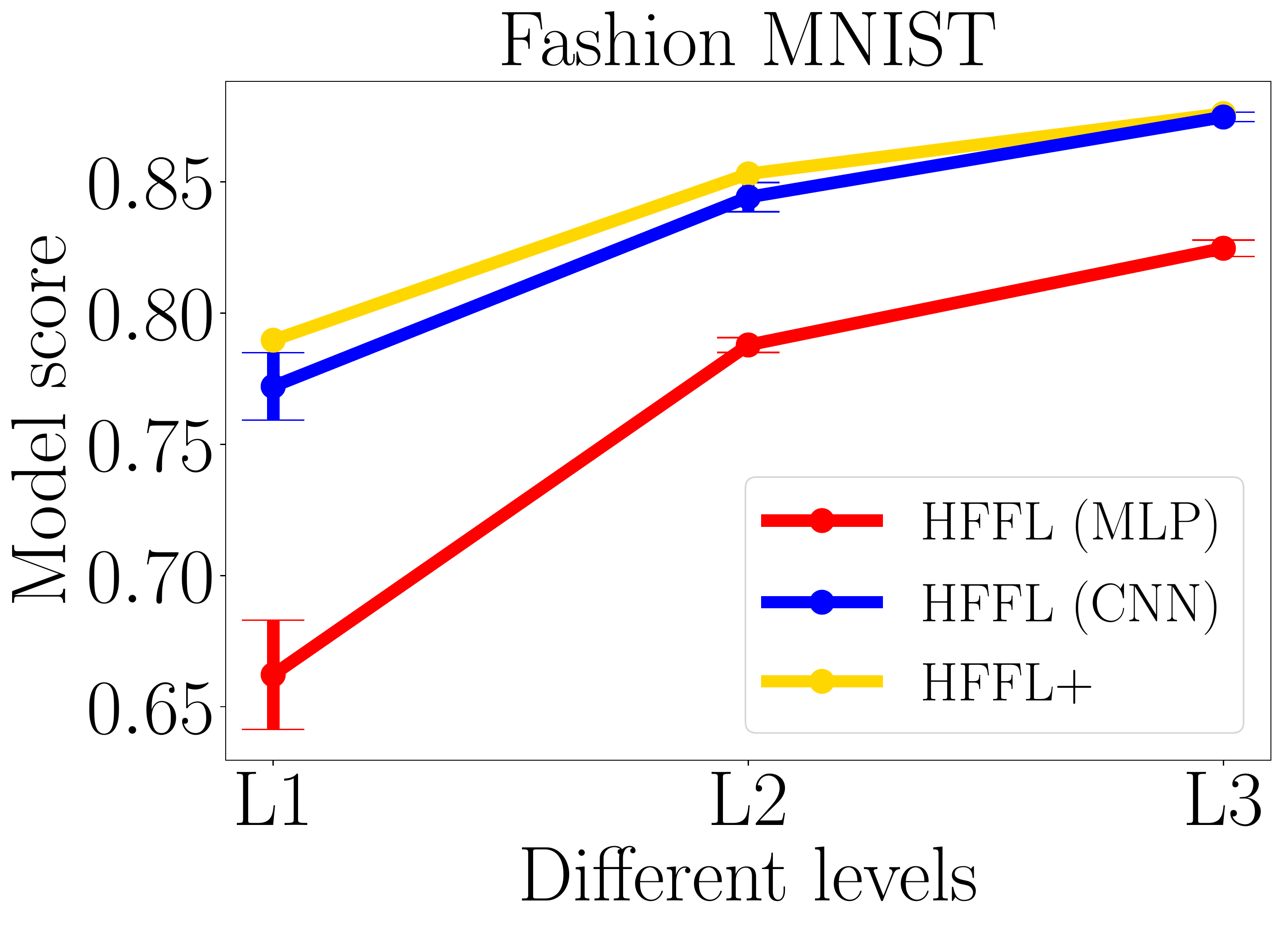}
	\includegraphics[scale=0.15]{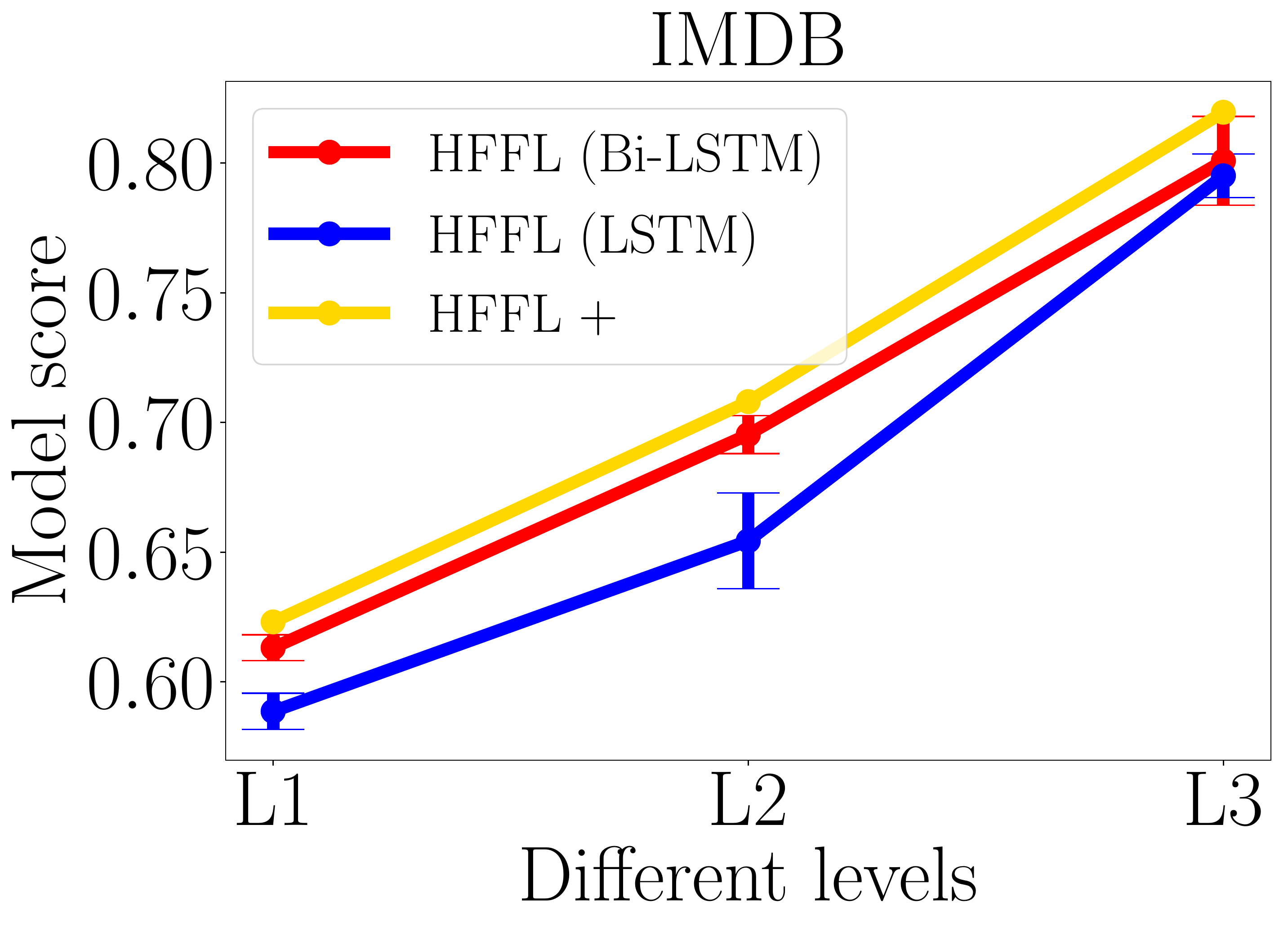}
	\caption{The model score of different levels in  HFFL and HFFL$+$ on four datasets: ADULT, MNIST, Fashion MNIST and IMDB. Model score is represented by the test accuracy on the test data. For HFFL, we run 5 trials with different random seeds and report median model score and its variance plotted as error bar. For HFFL$+$, we return the best-scoring model for each level. }
	\label{fig:HFFL}
\end{figure*}

Our HFFL$+$ is the advanced version of HFFL. It runs HFFL multiple times employing models with different architectures. HFFL$+$ returns the best-scoring model for each level. 
As shown in all panels of Figure~\ref{fig:HFFL}, the yellow line (HFFL$+$) is always higher than red and blue lines (HFFL).
Thus, different from HFFL, HFFL$+$ could pick different types of model for different levels.  For example, in the MNIST experiment (second left panel in Figure~\ref{fig:HFFL}), agents at level 1 have less amount of data and therefore prefer a simpler model, e.g., MLP, and agents at higher levels (2 or 3) have more data and prefer a more complicated model that can benefit from having more data, e.g., CNN. 
HFFL$+$ returns the best-scutring MLP at level 1 and returns the best-scoring CNN at level 2 and level 3. 

\section{Conclusion and Future Work}
\label{section:conclusion}
In this paper, we propose a novel federated learning framework, HFFL, to achieve fairness among self-interested agents by rewarding agents hierachical model updates based on their contribution levels, thereby facilitating federated learning. 
We first identify agents' contributions based on publicly verifiable factors such as data quality, data volume, etc. Then, we develop the hierarchical federated learning framework, HFFL, that upholds the fairness notion. Experimental results indicate the efficacy of our proposed methods. 
Future potential research includes (a) how to handle dishonest or heterogeneous agents in HFFL, (b) how to make HFFL$+$ computationally efficient, e.g., by leveraging transfer learning, (c) comprehensively quantify all publicly verifiable data factors, and (d) introduce differential privacy into HFFL to thwart inference attacks on models.  
\clearpage

\bibliographystyle{unsrt}
\bibliography{bibfile}

\begin{thebibliography}{10}

\bibitem{Andress_infoSecurity_2014}
Jason Andress.
\newblock {\em The Basics of Information Security (Second Edition)}.
\newblock Syngress, 2014.

\bibitem{McMahanMRHA17_aistat_fed_learning}
Brendan McMahan, Eider Moore, Daniel Ramage, Seth Hampson, and
  Blaise~Ag{\"{u}}era y~Arcas.
\newblock Communication-efficient learning of deep networks from decentralized
  data.
\newblock In {\em AISTATS}, 2017.

\bibitem{pmlr-v97-mohri19a}
Mehryar Mohri, Gary Sivek, and Ananda~Theertha Suresh.
\newblock Agnostic federated learning.
\newblock In {\em ICML}, volume~97, pages 4615--4625, 2019.

\bibitem{Li_fair_2019}
Tian Li, Maziar Sanjabi, and Virginia Smith.
\newblock Fair resource allocation in federated learning.
\newblock {\em arXiv preprint arXiv:1905.10497}, 2019.

\bibitem{kairouz2019advances}
Peter Kairouz, H~Brendan McMahan, Brendan Avent, Aur{\'e}lien Bellet, Mehdi
  Bennis, Arjun~Nitin Bhagoji, Keith Bonawitz, Zachary Charles, Graham Cormode,
  Rachel Cummings, et~al.
\newblock Advances and open problems in federated learning.
\newblock {\em arXiv preprint arXiv:1912.04977}, 2019.

\bibitem{Kim_Blockchained_2019}
H.~{Kim}, J.~{Park}, M.~{Bennis}, and S.~{Kim}.
\newblock Blockchained on-device federated learning.
\newblock {\em IEEE Communications Letters}, pages 1--1, 2019.

\bibitem{Kang_2019_incentive}
Jiawen Kang, Zehui Xiong, Dusit Niyato, Han Yu, Ying-Chang Liang, and Dong~In
  Kim.
\newblock Incentive design for efficient federated learning in mobile networks:
  A contract theory approach.
\newblock {\em arXiv preprint arXiv:1905.07479}, 2019.

\bibitem{Tran_federated_2019}
N.~H. {Tran}, W.~{Bao}, A.~{Zomaya}, M.~N.~H. {Nguyen}, and C.~S. {Hong}.
\newblock Federated learning over wireless networks: Optimization model design
  and analysis.
\newblock In {\em INFOCOM}, 2019.

\bibitem{bratton2017human}
John Bratton and Jeff Gold.
\newblock {\em Human resource management: theory and practice}.
\newblock Palgrave, 2017.

\bibitem{herzberg1968one}
Frederick Herzberg et~al.
\newblock One more time: How do you motivate employees, 1968.

\bibitem{Tornblom_1985_fairness}
Kjell~Y. Tornblom and Dan~R. Jonsson.
\newblock Subrules of the equality and contribution principles: Their perceived
  fairness in distribution and retribution.
\newblock {\em Social Psychology Quarterly}, 48(3):249--261, 1985.

\bibitem{Rabin_1993_fairness}
Matthew Rabin.
\newblock Incorporating fairness into game theory and economics.
\newblock {\em The American Economic Review}, 83(5):1281--1302, 1993.

\bibitem{Li_bandwidth_2008}
Li~Li, Martin Pal, and Yang~Richard Yang.
\newblock Proportional fairness in multi-rate wireless lans.
\newblock In {\em INFOCOM}, pages 1004--1012, 2008.

\bibitem{pmlr-v97-ghorbani19c}
Amirata Ghorbani and James Zou.
\newblock Data shapley: Equitable valuation of data for machine learning.
\newblock In {\em ICML}, volume~97, 2019.

\bibitem{Wang_contributions_2019}
Guan Wang, Charlie~Xiaoqian Dang, and Ziye Zhou.
\newblock Measure contribution of participants in federated learning.
\newblock {\em arXiv preprint arXiv:1909.08525}, 2019.

\bibitem{Holmstrom_1991}
Bengt Holmstrom and Paul Milgrom.
\newblock Multitask principal-agent analyses: Incentive contracts, asset
  ownership, and job design.
\newblock {\em JL Econ. \& Org.}, 7:24, 1991.

\bibitem{Richardson_2019_arxiv}
Adam Richardson, Aris Filos-Ratsikas, and Boi Faltings.
\newblock Rewarding high-quality data via influence functions.
\newblock {\em arXiv preprint arXiv:1908.11598}, 2019.

\bibitem{Feldman:2015:CRD:2783258.2783311}
Michael Feldman, Sorelle~A. Friedler, John Moeller, Carlos Scheidegger, and
  Suresh Venkatasubramanian.
\newblock Certifying and removing disparate impact.
\newblock In {\em KDD}, pages 259--268, 2015.

\bibitem{pmlr-v54-zafar17a}
Muhammad~Bilal Zafar, Isabel Valera, Manuel~Gomez Rogriguez, and Krishna~P.
  Gummadi.
\newblock Fairness constraints: Mechanisms for fair classification.
\newblock In {\em AISTATS}, 2017.

\bibitem{Goh_NIPS_2016}
Gabriel Goh, Andrew Cotter, Maya Gupta, and Michael~P Friedlander.
\newblock Satisfying real-world goals with dataset constraints.
\newblock In {\em NeurIPS}, pages 2415--2423, 2016.

\bibitem{NIPS2014_5588}
Chongjie Zhang and Julie~A Shah.
\newblock Fairness in multi-agent sequential decision-making.
\newblock In {\em NeurIPS}, pages 2636--2644. 2014.

\bibitem{Jiang_fairnessMA_2019}
Jiechuan Jiang and Zongqing Lu.
\newblock Learning fairness in multi-agent systems.
\newblock In {\em NeurIPS}, pages 13854--13865, 2019.

\bibitem{Parung_business_2008}
Joniarto Parung and Umit~S Bititci.
\newblock A metric for collaborative networks.
\newblock {\em Business Process Management Journal}, 14(5):654--674, 2008.

\bibitem{mathews2019learning}
Lincy Mathews and Seetha Hari.
\newblock Learning from imbalanced data.
\newblock In {\em Advanced Methodologies and Technologies in Network
  Architecture, Mobile Computing, and Data Analytics}, pages 403--414. IGI
  Global, 2019.

\bibitem{DBLP:conf/sp/NasrSH19_reza}
Milad Nasr, Reza Shokri, and Amir Houmansadr.
\newblock Comprehensive privacy analysis of deep learning: Passive and active
  white-box inference attacks against centralized and federated learning.
\newblock In {\em {IEEE} Symposium on Security and Privacy}, pages 739--753,
  2019.

\bibitem{nips/AgarwalSYKM18_dp_fl_distributed_sgd}
Naman Agarwal, Ananda~Theertha Suresh, Felix~X. Yu, Sanjiv Kumar, and Brendan
  McMahan.
\newblock cpsgd: Communication-efficient and differentially-private distributed
  {SGD}.
\newblock In {\em NeurIPS 2018}, 2018.

\bibitem{HaoLXLY19_dp_fl}
Meng Hao, Hongwei Li, Guowen Xu, Sen Liu, and Haomiao Yang.
\newblock Towards efficient and privacy-preserving federated deep learning.
\newblock In {\em ICC 2019}, pages 1--6, 2019.

\bibitem{Shai_2014_understandingML}
Shalev-Shwartz Shai and Ben-David Shai.
\newblock {\em Understanding Machine Learning: From Theory to Algorithms}.
\newblock Cambridge University Press, USA, 2014.

\bibitem{Dua:2019}
Dheeru Dua and Casey Graff.
\newblock {UCI} machine learning repository, 2017.

\bibitem{lecun2010mnist}
Yann LeCun, Corinna Cortes, and CJ~Burges.
\newblock Mnist handwritten digit database.
\newblock {\em ATT Labs [Online]}, 2, 2010.

\bibitem{xiao2017_fashion_mnist}
Han Xiao, Kashif Rasul, and Roland Vollgraf.
\newblock Fashion-mnist: a novel image dataset for benchmarking machine
  learning algorithms, 2017.

\bibitem{maas-EtAl:2011:ACL-HLT2011_imdb}
Andrew~L. Maas, Raymond~E. Daly, Peter~T. Pham, Dan Huang, Andrew~Y. Ng, and
  Christopher Potts.
\newblock Learning word vectors for sentiment analysis.
\newblock In {\em ACL: Human Language Technologies}, 2011.

\end{thebibliography}

\end{document}